\documentclass[runningheads]{llncs}
\usepackage{soul,color}
\usepackage[T1]{fontenc}
\usepackage{graphicx}
\usepackage{amsmath}
\usepackage{amssymb}
\usepackage{hyperref}
\usepackage{color}
\usepackage{tikz}
\usetikzlibrary{arrows.meta,positioning,decorations.pathreplacing,backgrounds,fit}

\begin{document}

\title{Deconstructing Superintelligence:\\ Identity, Self-Modification and Diff\'erance}
\titlerunning{Deconstructing Superintelligence}

\author{Elija Perrier\inst{1}\orcidID{0000-0002-6052-6798} }
\authorrunning{E. Perrier}

\institute{Centre for Quantum Software \& Information, UTS, Sydney \\ \email{elija.perrier@gmail.com} }

\maketitle

\begin{abstract}
Self-modification is routinely treated as constitutive of artificial superintelligence (\textbf{SI}), yet modification is a relative action requiring a \emph{supplement} outside the operation. We formalise this on an associative operator algebra $\mathcal{A}$ with update operator $\hat U$, difference operator $\hat D$, and self-representation operator $\hat R$, identifying the supplement with $\operatorname{Comm}(\hat U)$. A propagation theorem shows $[\hat U,\hat R]$ decomposes through $[\hat U,\hat D]$, so non-commutation propagates to self-representation. The liar paradox is the rank-one case $[\hat T,\Pi_L]=0$, and \emph{class $\mathbf{A}$} systems, in which $\hat U$ acts on $\hat D$, reproduce it at system scale, yielding a structure coinciding with Priest's inclosure schema and Derrida's \emph{diff\'erance}. Our results show that the strong self-modification taken to define superintelligence may undermine the persistent identity upon which such systems are premised.
\end{abstract}

\section{Introduction}
The ability to self-modify \cite{schmidhuber2007goedel,orseau2011self,steunebrink2012towards} is a hallmark of many theories of artificial general intelligence (\textbf{AGI}) and artificial superintelligence (\textbf{ASI}) \cite{good1965speculations,chalmers2010singularity,bostrom2014superintelligence}. Typically, self-modification is framed as a superlative quality of these theories: by being able to modify not just typical actions \cite{everitt2016self} but the underlying agentic architecture - physical, cognitive, procedural - that instantiates AGI/ASI \cite{orseau2012space,fallenstein2014problems,yudkowsky2013tiling}. Such systems are argued to possess greater adaptivity and, as a result, greater capacity to optimise, whether for decision-making \cite{hutter2005universal,legg2007universal}, survival \cite{ring2011delusion}, planning, or the shaping of their environment itself \cite{omohundro2008basic,yudkowsky2008artificial}. Self-modification is not a mere property of ASI but rather a \emph{necessary} feature: essential to what ASI is \cite{good1965speculations,chalmers2010singularity,schmidhuber2007goedel,yampolskiy2015analysis,yampolskiy2015limits}. Intuitively, this framing is understandable, even defensible. If intelligence involves, at its centre, the ability to act in adaptive ways - including reasoning and planning - then a cursory look at the world shows that the entities we identify as intelligent, whether characterised by reasoning, tool use, planning, or cognitive behaviour, all tend to exhibit heightened degrees of self-modification.

However, in most cases of self-modification, exactly \emph{what} is being causally modified is constrained \cite{lob1955solution,feferman1962transfinite,kreisel1968reflection}: one part of the system modifies another, while some structure is preserved for the duration of the operation. Without such preservation there is no determinate object whose modification can be tracked. We call this preserved structure a \emph{supplement} in the Derridean sense \cite{derrida1976grammatology,derrida1982margins,priest1994derrida}: a structure not modified in the relevant operation, yet required for the modified configuration to count as a modification of the same system.  The computational analogue is the metalanguage, fixed-point construction, revision rule, or non-classical background logic required to regulate semantic self-reference \cite{tarski1956concept,kripke1975outline,gupta1993revision,priest2006doubt}. The computational precedent is von Neumann's self-reproducing automaton, whose universal constructor copies a description it does not rewrite mid-construction \cite{vonneumann1966self}: reproduction and modification alike presuppose a part exempted from the operation. What is new in strong self-modification is that this supplement may itself enter the scope of modification.  Contemporary AGI/ASI theories propose self-modification of an extent, scope, and speed that far surpasses anything present in organic life: modification not only of policies or memories \cite{everitt2016self} but of evaluators, validators \cite{everitt2021reward,hubinger2019risks}, self-models \cite{fallenstein2014problems,yudkowsky2013tiling}, and the very individuation tests by which a system is recognised as itself \cite{orseau2011self,orseau2012space}. Yet few if any such theories address a fundamental question: when self-modification extends to the basis upon which a system is identified, individuated and differentiated, what, if anything, ensures its persistence and identity across time?

\subsection{Contributions}\label{sec:contributions}
We address answers to this question by making the following contributions:
\begin{enumerate}
\item[(i)] An operator-algebraic formalism for self-modifying systems in which the update operator $\hat U$, difference operator $\hat D$, and self-representation operator $\hat R$ are elements of an associative algebra $\mathcal{A}$ over a vector space of system-states, with the \emph{supplement} identified as the commutant of $\hat U$ and identity-unifying structures as projectors in that commutant.
\item[(ii)] A propagation theorem showing, via the adjoint action $\operatorname{ad}_{\hat U}(X)=[\hat U,X]$, how non-commutation between $\hat U$ and $\hat D$ propagates modification throughout the structure, affecting the structures through which system identity is represented $\hat R=F(\hat D)$ (Theorem~\ref{thm:propagation}). A Jacobi corollary shows that identity-preserving projectors must also commute with $[\hat U,\hat D]$, and that failure of such commutation reproduces the structure of liar paradoxes (Corollary~\ref{cor:no-unifying}; Proposition~\ref{prop:liar}).
\item[(iii)] An identification of \emph{class $\mathbf{A}$} architectures as those where $\hat U$ acts on $\hat D$ such that $[\hat U, \hat D] \neq 0$ acts on the difference operator. In this regime, self-modification includes the operator by which the system distinguishes what is the same or different in itself, yielding a diagonal construction on the self-description (Corollary~\ref{cor:diag}).
\item[(iv)] A mapping of the resulting formal class A dynamics onto Priest's inclosure schema \cite{priest1994derrida,priest2002beyond,priest2006doubt} and Derrida's \emph{diff\'erance} \cite{derrida1982differance}.
\end{enumerate}

\section{Superintelligence and Identity}
Almost every substantive claim made about a superintelligent system - that it pursues convergent instrumental goals \cite{bostrom2014superintelligence,omohundro2008basic}, retains its values, may deceive its principals during training \cite{hubinger2024sleeper,carlsmith2023scheming}, must be aligned \cite{russell2019human,soares2018value}, may be owed welfare consideration \cite{long2024welfare,butlin2023consciousness} - presupposes a persisting referent whose features may change but whose identity holds across time. For ordinary software this persistence relation is supplied externally through source control, deployment pipelines, cryptographic attestation \cite{shavit2023practices}, or institutional records. For human agents it is underwritten by biological and social continuities \cite{friston2010free,thompson2007mind}. Strong self-modification problematises these assumptions. A system that can edit its evaluators, validators, self-models, and individuation tests begins to act on the machinery by which its later states would be recognised as continuations of its earlier ones. Our central claim is that, once the supplement enters the action of the modifying operator, system identity is no longer secured by any unconditional unifying structure but only \emph{supplementally}: relative to whichever structure a given frame treats as unifying, with the unifying itself deferred at the next step of modification.

Identity puzzles are not new. Classical puzzles about ordinary objects \cite{wiggins2001sameness,sider2001four}, persons \cite{parfit1984reasons,shoemaker1984personal,schechtman1996constitution}, and Ship of Theseus style replacement \cite{plutarch1914theseus,carter1983artifacts} turn on a criterion determining which continuity - material, formal, functional, spatiotemporal, or sortal - counts as identity-preserving. Classical formalisms of self-reference similarly secure access to a self-description by holding fixed the apparatus under which reference is formed: arithmetisation in G\"odelian diagonalisation \cite{godel1931formally,boolos1993logic}, program enumeration in recursion theory \cite{kleene1952metamathematics}, provability predicates in modal and provability logic \cite{lob1955solution,smorynski2012self}, fixed semantic operators in truth theory \cite{kripke1975outline}, and abstract diagonal maps in categorical and general accounts of self-reference \cite{lawvere1969adjointness,yanofsky2003universal}. Such constructions rely on the same presupposition: the operator by which a sentence, program, proof predicate, or description refers to itself is not itself altered by the act of reference. By contrast, strong self-modification places the difference operator, evaluation criterion, and self-representation machinery within the action space of the very operator whose modifications they would adjudicate. Orseau and Ring's space-time embedded intelligence framework \cite{orseau2011self,ring2011delusion,orseau2012space}, Fallenstein and Soares' analysis of L\"obian obstacles \cite{fallenstein2014problems}, and Yampolskiy's discussion of recursively self-improving software \cite{yampolskiy2015limits,yampolskiy2015analysis} approach this issue from embedded agency or successor reasoning. The structural question is what identity across time becomes when the machinery used to preserve identity is itself among the objects strongly self-modifying systems can modify.
%
\section{Operator Formalism and Supplement}\label{sec:operator}
\subsection{States, operators, and the action algebra}\label{sec:operator-states}
To answer this question, we start by assuming systems of interest may be represented via a vector space of system-states $\mathcal{V}$ over a field $k\subseteq\mathbb{R}$. This is of course not the only way in which self-modifying systems may be represented, but vector representations capture a wide variety of ways in which superintelligence is likely to be identified. Elements of $\mathcal{V}$ are \emph{configurations}: finite formal $k$-linear combinations of admissible system-states. A single admissible state is a basis vector in the subspace of admissible system configurations $|s\rangle\in\mathcal{V}_S \subseteq \mathcal V$. A configuration $\sum_i \lambda_i |s_i\rangle, \lambda_i \in k$ represents a system composed of basis states $|s_i\rangle$ with weights $\lambda_i$. Let $\mathcal{A}=\operatorname{End}_k(\mathcal{V})$ be the associative algebra of $k$-linear maps (operators) $\mathcal{V}\to\mathcal{V}$, with composition as multiplication and identity $\mathbf{1}$. We distinguish three operators in $\mathcal{A}$: the \emph{update operator} $\hat U$ which modifies and evolves $v$ over time, the \emph{difference operator} $\hat D$, which identifies the different parts of which $v$ is composed and the \emph{self-representation operator} $\hat R$ which generates a representation of $v$. 

The dependencies among the objects are as follows. A system configuration $v\in\mathcal{V}$ is a multiplicity composed of parts (represented by the basis vectors $|s_i\rangle$) that may be acted upon. Component projectors $\Pi_i\in\mathcal{A}$ select parts of that multiplicity, $\Pi_i v=\lambda_i|s_i\rangle$. The difference operator $\hat D\in\mathcal{A}$ labels each part, $\hat D=\sum_i d_i\Pi_i$, where the distinct constants $d_i\in k$ tag the subspaces $\Pi_i\mathcal{V}$ rather than weight them. $\hat D$ is diagonal in the $\{\Pi_i\}$ decomposition, so $\hat D v=\sum_i d_i\lambda_i|s_i\rangle$. The label $d_i$ marks the part as different, letting the family $\{\Pi_i\}$ be handled as one operator. It attaches to the subspace $\Pi_i\mathcal{V}$, not to the magnitude $\lambda_i$, so $\hat D$ is the same operator whatever weights a given $v$ carries. A configuration alone does not distinguish a part of weight zero from an absent part. The distinction exists only relative to the chosen family $\{\Pi_i\}$ imposed on $\mathcal{V}$, structure that $\hat D$ encodes. The self-representation operator $\hat R=F(\hat D)$ composes a represented whole from those differences, possibly preserving, suppressing, reweighting (with representational coefficients $r_i$), or relating the differentiated components. The update operator $\hat U$ describes the evolution of the system, modifying parts of $v$. A unifying projector $\Pi\in\mathcal{A}$ (see below) selects the subspace whose preservation makes a represented whole count as the same system as it evolves under $\hat U$. The algebra $\mathcal{A}$ is equipped with the commutator $[\hat X,\hat Y]=\hat X\hat Y-\hat Y\hat X$ \cite{knapp1996lie}, and operators may be composed.

Operationally, the formalism treats system configurations through their transformations. The update operator $\hat U$ modifies a configuration $v\in\mathcal{V}$ i.e. $\hat Uv \mapsto v'$. The system-state $v$ is then analysed through the operators acting on it. The question of interest for our purposes is the extent of that modification: whether the distinctions through which $v$ is identified and represented are preserved under the action of $\hat U$. We argue that this can be measured by the order of operations and thus the commutator algebra. If $[\hat U,\hat D]=0$, then applying $\hat U$ and then $\hat D$ gives the same result as applying $\hat D$ and then $\hat U$: the configuration may change, but the differences expressed by $\hat D$ are preserved. In this case, the underlying differences are preserved and the self-representation $\hat R=F(\hat D)$ built from $\hat D$ remains invariant. If $[\hat U,\hat D]\neq0$, the order matters, and any self-representation $\hat R=F(\hat D)$ built from that differentiation may be itself modified.
\subsection{Unifying projectors}
To capture that which remains the same about a system as it evolves (such that we can identify it as the same system), we introduce an idempotent \emph{unifying projector} $\Pi\in\mathcal{A}$ ($\Pi^2=\Pi$). Its image $\Pi\mathcal{V}$ represents a property treated as constitutive of identity - behavioural signature, causal provenance, substrate integrity, or another persistence criterion. $\Pi$ is \emph{preserved} by an operator $\hat X$ if $[\hat X,\Pi]=0$: $\hat X$ maps $\Pi\mathcal{V}$ and $\ker\Pi$ into themselves, preserving the decomposition $\mathcal{V}=\Pi\mathcal{V}\oplus\ker\Pi$. This role is distinct from that of $\hat R$. The operator $\hat R$ represents the system as a structured whole. $\Pi$ selects the identity-bearing subspace whose preservation makes that represented whole count as the same system under modification by $\hat U$. Thus $\Pi$ encodes a criterion of identity, not the self-representation itself. The algebraic notion capturing such preservation is the commutant.
\subsection{The supplement and commutant}\label{sec:operator-supp}
For any $\hat X\in\mathcal{A}$, the \emph{commutant} of $\hat X$ in $\mathcal{A}$ is
\[
\operatorname{Comm}(\hat X)=\{Y\in\mathcal{A}:[\hat X,Y]=0\}.
\]
It is a unital subalgebra of $\mathcal{A}$. Since self-modification is an operation of one part or structure of a system upon another, the operation requires some structure not itself modified in that operation. Otherwise there is no determinate object whose modification can be tracked. In the operator formalism, this preserved structure is represented by membership in the commutant.
\begin{definition}[Supplement]\label{def:supplement}
A \emph{supplement} for $\hat X\in\mathcal{A}$ is an element of the commutant $\operatorname{Comm}(\hat X)$. A \emph{unifying supplement} is a non-trivial unifying projector $\Pi\in\operatorname{Comm}(\hat X)$, $\Pi\neq\mathbf{0}$, $\Pi\neq\mathbf{1}$.
\end{definition}

A unifying supplement for $\hat X$ is therefore a non-trivial projector $\Pi\in\operatorname{Comm}(\hat X)$. If no such projector exists, $\hat X$ preserves no non-trivial decomposition of $\mathcal{V}$, and persistence across $\hat U$ is relative to whichever projector $\hat U$ preserves. For any configuration $v\in\mathcal{V}$, the decomposition $v=\Pi v+(\mathbf{1}-\Pi)v$ separates the component treated as identity-bearing from the remainder. If $[\hat X,\Pi]=0$, then $\hat X(\Pi v)=\Pi(\hat Xv)$, so the identity-bearing component is preserved under $\hat X$. If no non-trivial $\Pi$ satisfies $\Pi^2=\Pi$ and $[\hat X,\Pi]=0$, there is no preserved subspace relative to which $\hat X$ acts on one persisting object rather than merely transforming the total state space.

This is the formal sense of Derrida's supplement: $\operatorname{Comm}(\hat X)$ is exterior to $\hat X$'s action yet supplies the invariant without which that action is not individuated. Derrida's idea that the supplement is at once exterior and constitutive is captured by this formalism: $\operatorname{Comm}(\hat X)$ lies outside $\hat X$'s adjoint action and yet supplies the invariant frames without which $\hat X$'s operation is not individuated. Classical self-reference constructions similarly rely on supplements whose existence depends upon an execution environment external to $\hat X$ itself. To show what happens when the operator from which self-representation is built is itself modified, we use commutator calculus to make explicit how non-commutation propagates through composite operators constitutive of the system itself.

\subsection{Commutator algebra and the propagation theorem}\label{sec:operator-calc}
The associative algebra $\mathcal{A}$ carries a Lie bracket $[X,Y]=XY-YX$ satisfying the Jacobi identity, and for each $\hat X\in\mathcal{A}$ the adjoint action
\[
\operatorname{ad}_{\hat X}(Y):=[\hat X,Y]
\]
is a derivation on the associative product:
\[
\operatorname{ad}_{\hat X}(YZ)=\operatorname{ad}_{\hat X}(Y)Z+Y\operatorname{ad}_{\hat X}(Z). \tag{$\ast$}
\]
The derivation property \cite{helgason1979differential} is the mechanism by which non-commutation in one part of a composite operator propagates to the whole. If $\hat R=F(\hat D)$, applying $\operatorname{ad}_{\hat U}$ distributes the commutator $[\hat U,\hat D]$ through each occurrence of $\hat D$. In Derridean terms, $\hat D$ supplies difference and $\hat U$ enacts deferral. The question is the extent to which self-representation built from differences $\hat D$ remains invariant under propagation of the modifications enacted by $\hat U$. The answer depends upon the extent to which non-commutation is propagated which we formalise in the following theorem.

\begin{theorem}[Propagation of non-commutation]\label{thm:propagation}
Let $(\hat U,\hat D,\hat R)\in\mathcal{A}^3$ and suppose $\hat R$ factors through $\hat D$: that is, $\hat R=F(\hat D)$ for some composite $F$ built by finite sums, scalar multiples, and products from $\hat D$ and operators in $\operatorname{Comm}(\hat U)$. Then
\[
[\hat U,\hat R]=\sum_j A_j[\hat U,\hat D]B_j
\]
for some $A_j,B_j\in\mathcal{A}$ determined by the chosen factorisation of $F$. Consequently, if $[\hat U,\hat D]\neq0$, non-commutation propagates from $\hat D$ to $\hat R$ whenever this finite sum does not vanish by cancellation or annihilation. In particular, $[\hat U,\hat R]\neq0$ for $\hat R=\hat D$. More generally, for nonconstant composites $F$, vanishing of $[\hat U,\hat R]$ requires special algebraic relations among $F$, $\hat D$, and $[\hat U,\hat D]$.
\end{theorem}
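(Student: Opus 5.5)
The plan is structural induction on the construction of $F$, with the derivation property $(\ast)$ of $\operatorname{ad}_{\hat U}$ as the only engine. First fix the grammar of admissible composites. An expression $F$ is either the generator $\hat D$, or some $C\in\operatorname{Comm}(\hat U)$ (scalar multiples of $\mathbf{1}$ included), or one of $F_1+F_2$, $\lambda F_1$ with $\lambda\in k$, or $F_1F_2$. The claim to prove by induction on the depth of this expression is that $\operatorname{ad}_{\hat U}(F)$ lies in the set of finite sums $\sum_j A_j[\hat U,\hat D]B_j$ with $A_j,B_j\in\mathcal{A}$. Call this set $\mathcal{S}$; it is the two-sided ideal of $\mathcal{A}$ generated by $[\hat U,\hat D]$. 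The $A_j,B_j$ will be produced explicitly, which is why they depend on the chosen factorisation rather than on the operator $\hat R$ alone.

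For the base cases: if $F=\hat D$, take $A=B=\mathbf{1}$. If $F=C\in\operatorname{Comm}(\hat U)$, then $\operatorname{ad}_{\hat U}(C)=0$, which is the empty sum. For sums and scalar multiples, use linearity of $\operatorname{ad}_{\hat U}$ and concatenate or rescale the index sets. For products, apply $(\ast)$:
\[
\operatorname{ad}_{\hat U}(F_1F_2)=\operatorname{ad}_{\hat U}(F_1)F_2+F_1\operatorname{ad}_{\hat U}(F_2).
\]
By the induction hypothesis each $\operatorname{ad}_{\hat U}(F_i)$ lies in $\mathcal{S}$. Right-multiplying the first term by $F_2$ and left-multiplying the second by $F_1$ keeps both in $\mathcal{S}$, with $B_j\mapsto B_jF_2$ and $A_j\mapsto F_1A_j$. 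Unwinding the recursion gives a concrete formula. For a monomial $X_1\cdots X_n$ in which each $X_i$ is $\hat D$ or lies in the commutant,
\[
[\hat U,X_1\cdots X_n]=\sum_{i:\,X_i=\hat D}(X_1\cdots X_{i-1})\,[\hat U,\hat D]\,(X_{i+1}\cdots X_n).
\]
I will state this explicitly, since it makes the claim that $A_j,B_j$ are determined by the factorisation precise.

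The consequences then follow. The case $\hat R=\hat D$ is the base case, so $[\hat U,\hat R]=[\hat U,\hat D]\neq0$. The general clause is the contrapositive reading: if $[\hat U,\hat D]\neq0$ yet $[\hat U,\hat R]=0$, the displayed sum must vanish. That requires either annihilation (some $A_j$ or $B_j$ killing $[\hat U,\hat D]$) or cancellation between terms, and both are algebraic relations among $F$, $\hat D$ and $[\hat U,\hat D]$. Two examples would show this qualifier cannot be dropped. One is $F=\hat D\hat D'-\hat D'\hat D$-type constructions that cancel. A simpler one is a polynomial $p(\hat D)$ with $p(\hat D)=0$, for instance the minimal polynomial of $\hat D$: here $[\hat U,p(\hat D)]=0$ while $[\hat U,\hat D]\neq0$.

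The main obstacle is not the algebra, which is routine, but making the statement well-posed. Since $F$ is a syntactic composite and not a function of the operator $\hat R$, the $A_j,B_j$ must be tied to a chosen expression tree. The "whenever the sum does not vanish" clause must also be presented as a genuine caveat and not as a theorem. I will therefore state precisely that non-propagation is equivalent to vanishing of that explicit sum, and give the minimal-polynomial example to show the caveat is necessary.
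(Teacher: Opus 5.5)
Your proposal is correct and is essentially the paper's argument: the paper expands $F(\hat D)$ into monomials $cX_1\hat D^{n_1}\cdots X_m\hat D^{n_m}X_{m+1}$ and applies linearity and the derivation property $(\ast)$, using $\operatorname{ad}_{\hat U}(X_i)=0$ and $\operatorname{ad}_{\hat U}(\hat D^n)=\sum_{j=0}^{n-1}\hat D^j[\hat U,\hat D]\hat D^{n-1-j}$, which is your structural induction unwound into your explicit monomial formula. One correction to an optional extra: the $\hat D\hat D'-\hat D'\hat D$ example does not cancel in general (for $\hat D'\in\operatorname{Comm}(\hat U)$ its commutator with $\hat U$ is $[[\hat U,\hat D],\hat D']$), so rely instead on the minimal-polynomial example, or on an involution $\hat D^2=\mathbf{1}$, for which $[\hat U,\hat D^2]=0$ even when $[\hat U,\hat D]\neq0$.
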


\begin{proof}
Write $\hat R=F(\hat D)$ as a finite sum of terms, each of the form 
\[
c\cdot X_1\hat D^{n_1}X_2\hat D^{n_2}\cdots X_m\hat D^{n_m}X_{m+1}
\]
with $c\in k$, $X_i\in\operatorname{Comm}(\hat U)$, and $n_i\geq0$. Applying $\operatorname{ad}_{\hat U}$, $k$-linearity gives a sum over terms, and $(\ast)$ applied iteratively distributes $\operatorname{ad}_{\hat U}$ over each product. Since $\operatorname{ad}_{\hat U}(X_i)=0$ and
\[
\operatorname{ad}_{\hat U}(\hat D^n)=\sum_{j=0}^{n-1}\hat D^j[\hat U,\hat D]\hat D^{n-1-j}
\]
by repeated use of $(\ast)$, each term of $\operatorname{ad}_{\hat U}(\hat R)$ collects into a $k$-linear combination of expressions of the form $A_j[\hat U,\hat D]B_j$ with $A_j,B_j\in\mathcal{A}$. Thus $[\hat U,\hat R]$ is a finite sum of terms each containing $[\hat U,\hat D]$ as a factor. The expression therefore vanishes only when the factorisation of $F$ introduces cancellation or annihilation among these terms. \qed
\end{proof}
Theorem~\ref{thm:propagation} shows that, when $[\hat U,\hat D]\neq0$, a self-representation $\hat R=F(\hat D)$ inherits the corresponding commutator structure unless the chosen factorisation cancels or annihilates it. Thus $[\hat U,\hat R]=0$ requires either commutation at the level of $\hat D$ or cancellation inside $F$. The question is then: if $[\hat U,\hat D]\neq0$, which unifying projectors can still be preserved? The answer is to be found via the application of the Jacobi identity:  if $\Pi$ is to remain a unifying supplement for both $\hat U$ and $\hat D$, it must also commute with the commutator $[\hat U,\hat D]$.

\subsection{Jacobi and unifying supplements}\label{sec:operator-jacobi}

Applied to $\hat U$, $\hat D$ and $\Pi$, the Jacobi identity gives
\[
[\hat U,[\hat D,\Pi]]=[[\hat U,\hat D],\Pi]+[\hat D,[\hat U,\Pi]].
\]
The identity relates preservation under $\hat U$ and $\hat D$ individually to preservation under their interaction $[\hat U,\hat D]$. A projector $\Pi$ may commute with $\hat U$ and $\hat D$ separately only if it also remains invariant under the additional structure generated by their non-commutation. The Jacobi identity therefore constrains which unifying projectors can remain invariant once $\hat U$ acts non-trivially on $\hat D$. The consequence is that a unifying projector preserved by both $\hat U$ and $\hat D$ cannot be arbitrary once $[\hat U,\hat D]\neq0$: it must also remain invariant under the additional structure generated by their interaction, as reflected in the following corollary.

\begin{corollary}[Restriction on unifying supplements]\label{cor:no-unifying}
Let $\hat U,\hat D\in\mathcal{A}$ satisfy $[\hat U,\hat D]\neq0$. A non-trivial unifying projector $\Pi$ satisfies $[\hat U,\Pi]=[\hat D,\Pi]=0$ simultaneously only if $[[\hat U,\hat D],\Pi]=0$. In particular, $\Pi$ must lie in the commutant of $[\hat U,\hat D]$, a proper subalgebra of $\mathcal{A}$ whenever $[\hat U,\hat D]$ is non-central.
\end{corollary}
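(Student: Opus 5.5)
The plan is to read the first claim directly off the Jacobi identity displayed just above the corollary, and then to treat the ``proper subalgebra'' clause as a separate statement about commutants of non-central elements. Suppose $\Pi$ is a non-trivial projector with $[\hat U,\Pi]=[\hat D,\Pi]=0$. In the identity
\[
[\hat U,[\hat D,\Pi]]=[[\hat U,\hat D],\Pi]+[\hat D,[\hat U,\Pi]],
\]
the left-hand side is $[\hat U,0]=0$ and the last term on the right is $[\hat D,0]=0$. Hence $[[\hat U,\hat D],\Pi]=0$, which gives the necessary condition. Equivalently, I can expand $[\hat U\hat D-\hat D\hat U,\Pi]$ using the derivation property $(\ast)$ for $\operatorname{ad}_\Pi$, or simply note that $\operatorname{Comm}(\Pi)$ is a subalgebra containing $\hat U$ and $\hat D$ and therefore contains $\hat U\hat D-\hat D\hat U$. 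I would mention this second route because it makes clear that the statement is just closure of the commutant under products and differences. Neither the hypothesis $[\hat U,\hat D]\neq0$ nor the idempotence of $\Pi$ is needed for this implication. Their role is only to make the conclusion a non-vacuous restriction.

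For the second clause, set $C=[\hat U,\hat D]$. I will show that $\operatorname{Comm}(C)$ is a unital subalgebra of $\mathcal A$: it contains $\mathbf 1$, and it is closed under sums, scalar multiples and products by bilinearity and $(\ast)$, as already noted for commutants in Section~\ref{sec:operator-supp}. It is proper exactly when some $Y\in\mathcal A$ fails to commute with $C$, and that is the definition of $C$ being non-central. So properness is immediate from the hypothesis.

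To explain why the hypothesis is natural, I would add that in $\mathcal A=\operatorname{End}_k(\mathcal V)$ the centre consists of the scalars $k\mathbf 1$. For finite-dimensional $\mathcal V$, a commutator has trace zero, so it cannot be a nonzero scalar over $k\subseteq\mathbb R$ (characteristic zero). Hence $C\neq0$ already forces $C$ to be non-central, and the restriction is automatic. The main obstacle is the infinite-dimensional case. There the trace argument fails, and nonzero scalar commutators can occur (for example $[\partial,x]=\mathbf 1$). I would therefore keep non-centrality as an explicit hypothesis, exactly as the corollary states it, rather than derive it from $C\neq0$. In that case, if $C$ is central, every $\Pi$ trivially commutes with $C$ and the corollary imposes no constraint. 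I would conclude by remarking that this is consistent with the statement, since it asserts only a necessary condition.
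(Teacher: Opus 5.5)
Your proposal is correct and its main argument is the paper's: setting $[\hat U,\Pi]=[\hat D,\Pi]=0$ in the Jacobi identity kills the left-hand side and the term $[\hat D,[\hat U,\Pi]]$, leaving $[[\hat U,\hat D],\Pi]=0$. Your additions are all sound and fill in what the paper's one-line proof leaves implicit: the alternative route via closure of $\operatorname{Comm}(\Pi)$ under products and differences, the observation that $\operatorname{Comm}([\hat U,\hat D])$ is proper exactly when $[\hat U,\hat D]$ is non-central, and the trace argument showing that in finite dimensions $[\hat U,\hat D]\neq0$ already forces non-centrality.
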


\begin{proof}
If $[\hat U,\Pi]=[\hat D,\Pi]=0$, the right-hand side of the Jacobi identity reduces to $[[\hat U,\hat D],\Pi]$, and the left-hand side is $[\hat U,0]=0$. Hence $[[\hat U,\hat D],\Pi]=0$. \qed
\end{proof}
The corollary says that preservation by $\hat U$ and $\hat D$ separately is not sufficient when $[\hat U,\hat D]\neq0$. Their interaction generates an additional transformation, and a projector can function as an identity criterion only if it is preserved by that interaction as well. Thus admissible projectors are restricted to $\operatorname{Comm}([\hat U,\hat D])$ whenever the commutator is non-central.
\subsection{Supplemental identity}\label{sec:operator-supp-id}
An \emph{unconditional unifying operator} of identity for $(\hat U,\hat D,\hat R)$ is a projector $\Pi$ satisfying $[\hat U,\Pi]=[\hat D,\Pi]=[\hat R,\Pi]=0$: preserved by every operation the system performs on its states. A \emph{supplemental unifying} is one where only some of these commute. In this sense, the identity of such self-modifying systems is supplemental.

\begin{definition}[Supplemental identity]\label{def:supp-id}
The triple $(\hat U,\hat D,\hat R)$ admits \emph{unconditional identity} if there exists a non-trivial $\Pi\in\operatorname{Comm}(\hat U)\cap\operatorname{Comm}(\hat D)\cap\operatorname{Comm}(\hat R)$. Otherwise its identity is \emph{supplemental}: the system coheres only under a frame treating some subset of $\{\hat U,\hat D,\hat R\}$ as unifying, with the remaining operators violating the unifying.
\end{definition}
Supplemental identity is the formal analogue of Derrida's claim that identity is secured only through a supplement that preserves the distinction by which a configuration is counted as the same system rather than a different one. In our case, the supplement is the projector $\Pi$. Its preservation relative to $\hat X$ is expressed via $[\hat X,\Pi]=0$. This role is distinct from $\hat R$ which represents the system as a structured whole. $\Pi$ selects the identity-bearing subspace whose preservation makes that represented whole count as the same system under modification by $\hat U$. Self-reference makes the dependence visible. A self-referential construction requires a separation between the signifying operation and the signified object: the operation must pick out what it signifies without collapsing into it. Algebraically, that separation is represented by a non-zero commutator between the signifying operator and the projector onto the signified configuration. This abstract dependence on a stabilising supplement becomes most transparent in simple cases of self-reference, where the distinction between an operation and what it acts upon must be held in place for the construction to make sense. The liar paradox provides a minimal setting in which to see what happens when this distinction is not sustained.
\subsection{The liar as commutator collapse}\label{sec:liar}
Consider the liar sentence
\[
L:\quad \text{``this sentence is false''.}
\]
and take $\mathcal{V}$ to include the formal configurations over which semantic operations are evaluated. Let $|L\rangle\in\mathcal{V}$ denote the configuration corresponding to $L$, and let $\Pi_L$ be the rank-one projector onto $k\cdot |L\rangle$. The demonstrative ``this'' functions as a \emph{sign operator} $\hat T\in\mathcal{A}$ selecting the referent of the demonstrative. The predicate ``is false'' is represented by a second operator $\hat F\in\mathcal{A}$. The sentence itself can be expressed as a configuration $|L\rangle\in\mathcal{V}$ with rank-one projector $\Pi_L$ onto $k\cdot |L\rangle$. Its intended content is that $L$ asserts the falsity of the configuration picked out by $\hat T$. The classical sign/signified distinction requires that the signifying operation $\hat T$ and the configuration $|L\rangle$ it signifies remain distinct: reference is well-formed only where the act of signification is not identical with its referent \cite{tarski1956concept,martin1975representing}. Algebraically, that separation is represented by the non-commutation condition $[\hat T,\Pi_L]\neq0$, where $\Pi_L$ projects onto the configuration signified by the demonstrative. In the case of the liar, the linguistic supplement consists of the reference conventions, speaker context, and situational embedding that fix the demonstrative's referent and the interpretation of falsity. Once represented in the operator algebra, these supplementary structures can be treated as operators in $\operatorname{Comm}(\hat T)\cap\operatorname{Comm}(\hat F)$: they preserve the operation of demonstrative reference and the falsity predicate without becoming the configuration evaluated. Such apparent paradoxes of self-reference \cite{tarski1956concept,parsons1974liar,kripke1975outline} arise when the operation of reference is no longer kept distinct from the configuration it refers to. In the operator formalism, that distinction is expressed by a non-zero commutator between the signifying operator and the projector onto the signified configuration.
\begin{proposition}[Liar as commutator collapse]\label{prop:liar}
Let $\hat T,\hat F\in\mathcal{A}$ and let $\Pi_L$ be a rank-one projector onto $k\cdot |L\rangle$, where $|L\rangle$ is interpreted as asserting the falsity of $\hat T|L\rangle$. If $[\hat T,\Pi_L]=0$, then $\hat T|L\rangle=\lambda |L\rangle$ for some $\lambda\in k$. Under the intended demonstrative reading, $\lambda=1$, so the sign and the signified collapse: $\hat T|L\rangle=|L\rangle$. If the ambient semantics is classical and treats $\hat F$ as falsity, this collapse yields the usual liar equation $v(L)=\neg v(L)$, which has no two-valued solution. Classical self-reference avoids this collapse when a linguistic supplement sustains $[\hat T,\Pi_L]\neq0$.
\end{proposition}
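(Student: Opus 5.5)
The plan has three stages: a linear-algebra step, an interpretive step fixing $\lambda$, and a semantic step that derives the liar equation. The substantive content is the first step. Assume $[\hat T,\Pi_L]=0$, so $\hat T\Pi_L=\Pi_L\hat T$. I would apply both sides to $|L\rangle$. Since $\Pi_L|L\rangle=|L\rangle$, this gives
\[
\hat T|L\rangle=\hat T\Pi_L|L\rangle=\Pi_L\hat T|L\rangle\in k\cdot|L\rangle .
\]
Hence $\hat T|L\rangle=\lambda|L\rangle$ for some $\lambda\in k$. This is the standard fact that an operator commuting with a projector preserves its image, as already noted for unifying projectors in Section~\ref{sec:operator}. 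Because $\Pi_L$ has rank one, the image is the line $k\cdot|L\rangle$, so $|L\rangle$ is an eigenvector of $\hat T$.

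The second step fixes $\lambda=1$. This is not a consequence of the algebra; it comes from the ``intended demonstrative reading'' in the hypothesis, and I would make that explicit. The demonstrative ``this'' is meant to return its referent unchanged, not a rescaled configuration. So $\lambda=0$ (no referent) and $\lambda\neq0,1$ (a reweighted configuration, not the sentence itself) are excluded by the semantics of demonstrative reference. Imposing this normalisation yields $\hat T|L\rangle=|L\rangle$, i.e.\ the sign operator acts as the identity on the signified configuration. This is exactly the collapse of sign and signified described before the proposition.

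The third step derives the liar equation. I would take a classical two-valued valuation $v$ on the configurations, with $\hat F$ interpreted as falsity, so that the valuation satisfies $v(\hat F w)=\neg v(w)$. By hypothesis, $|L\rangle$ asserts the falsity of $\hat T|L\rangle$, so $v(L)=\neg v(\hat T|L\rangle)$. Substituting the collapse $\hat T|L\rangle=|L\rangle$ gives $v(L)=\neg v(L)$. Checking the two cases $v(L)\in\{0,1\}$ shows that neither satisfies this equation, so no two-valued solution exists.

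The final sentence of the proposition is the contrapositive. If $[\hat T,\Pi_L]\neq0$, the first step does not apply, so $|L\rangle$ need not be an eigenvector of $\hat T$ and no collapse is forced. A supplement sustaining that condition therefore blocks the derivation. The main obstacle is the second and third steps, not the linear algebra. Both $\lambda=1$ and the compatibility of $\hat F$ with $v$ are interpretive assumptions rather than theorems, so I would present them as conditions stated explicitly in the proof.
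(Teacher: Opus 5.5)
Your proposal is correct and follows the paper's proof step for step. Commutation makes $\hat T$ preserve $\Pi_L\mathcal{V}=k\cdot|L\rangle$, the intended demonstrative reading fixes $\lambda=1$, and substituting into the classical reading of falsity yields $v(L)=\neg v(L)$; the paper offers no argument at all for the proposition's final sentence. One correction to your last paragraph: it is not the contrapositive, which would be ``no collapse implies $[\hat T,\Pi_L]\neq0$''. Non-commutation is only necessary, not sufficient, for avoiding collapse, since $\hat T$ can fix $|L\rangle$ while failing to preserve $\ker\Pi_L$, so your hedged ``no collapse is forced'' is exactly the right claim.
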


\begin{proof}
Since $\Pi_L$ is rank-one, $\Pi_L\mathcal{V}=k\cdot |L\rangle$. If $[\hat T,\Pi_L]=0$, then $\hat T(\Pi_L\mathcal{V})\subseteq\Pi_L\mathcal{V}$, hence $\hat T|L\rangle=\lambda |L\rangle$ for some $\lambda\in k$. The intended demonstrative reading sets $\lambda=1$, giving $\hat T|L\rangle=|L\rangle$. Substituting this into the classical interpretation of ``$L$ asserts the falsity of $\hat T|L\rangle$'' yields $v(L)=\neg v(L)$, which has no two-valued solution. Thus the commutator collapse is the algebraic condition under which the classical liar contradiction is recovered. \qed
\end{proof}
Here $[\hat T,\Pi_L]=0$ is the collapse of the sign/signified distinction: the operation of reference turns back onto the configuration whose evaluation it is meant to determine. In classical two-valued semantics this yields the liar equation. Standard treatments avoid collapse by preserving a supplement external to the self-reference itself - a semantic hierarchy \cite{tarski1956concept}, fixed-point construction \cite{kripke1975outline}, revision rule \cite{gupta1982truth,herzberger1982notes,belnap1982gupta,gupta1993revision}, or non-classical background logic \cite{priest1979logic,priest2006doubt}. This is consistent with Priest's inclosure treatment of the liar \cite{priest1994derrida,priest2002beyond}.

These representatives preserve $\hat T$ and $\hat F$ while leaving the signified configuration distinct from both. The liar is the minimal case in which that separation fails: $\hat T$ picks out the very configuration whose falsity is asserted, so reference and evaluation close on the same object. The condition required for a well-formed act of reference then becomes the condition generating contradiction. We argue below that Class $\mathbf{A}$ systems reproduce the same pattern at system scale, reflecting a dilemma at the heart of strongly self-modifying systems: the operator through which the system identifies its own features becomes part of what self-modification can modify.

\section{Class $\mathbf{A}$: The Liar at System Scale}\label{sec:setup}

We now extend the operator formalism to a class of systems whose architecture corresponds to contemporary AGI/ASI proposals. In order to study such systems, we define a class of architectures, denoted class $\mathbf{A}$, capturing the regime in which $\hat U$ acts on $\hat D$ and $\hat R$. It is precisely this condition that places the system in the regime where the propagation theorem applies. Moreover, we set out circumstances in which self-modification of class $\mathbf{A}$ systems corresponds to the liar's commutator collapse at the scale of the whole system. 

\subsection{Class $\mathbf{A}$: strong self-modification}
First, we define self-representation by the condition that the image of $\hat R$ lies within the admissible subspace, $\hat R(\mathcal{V}_S)\subseteq\mathcal{V}_S$. $\hat R(\mathcal{V})\subseteq\mathcal{V}$ holds automatically since $\hat R\in\mathcal{A}=\operatorname{End}_k(\mathcal{V})$. By contrast, a self-model $\hat R(\mathcal{V}_S)\subseteq\mathcal{V}_S$ is internal to the system (hence within $\mathcal V_S$) and is acted on by $\hat U$ and $\hat D$ as any other element of $\mathcal{V}_S$. A representation with $\hat R(\mathcal{V}_S)\not\subseteq\mathcal{V}_S$ is external and is not a self-representation in this sense. The internal case is what allows $\hat U$ to act on a self-description $\hat R|s\rangle$, as in Corollary~\ref{cor:diag}, and what makes $\hat R$ available as a signifying operator in the sense of Proposition~\ref{prop:liar} below. Second, we distinguish two regimes of self-modification: \emph{weak self-modification}, which preserves a unifying projector $\Pi$ and so secures identity under $\hat U$, and \emph{strong self-modification}, in which $\hat U$ satisfies $[\hat U,\hat D]\neq0$ and no such $\Pi$ is preserved.

\begin{definition}[Weak self-modification]\label{def:weaksm}
A triple $(\hat U,\hat D,\hat R)\in\mathcal{A}^3$ is \emph{weakly self-modifying} if $\hat U$ acts non-trivially on $\mathcal{V}$ while preserving a unifying supplement: there is a non-trivial projector $\Pi\in\operatorname{Comm}(\hat U)\cap\operatorname{Comm}(\hat D)\cap\operatorname{Comm}(\hat R)$. The structure securing identity is fixed under $\hat U$, and modification reaches only components outside it.
\end{definition}

\begin{definition}[Strong self-modification]\label{def:strongsm}
A triple $(\hat U,\hat D,\hat R)\in\mathcal{A}^3$ is \emph{strongly self-modifying} if $[\hat U,\Pi_i]\neq0$ for all $\Pi_i$, so that $[\hat U,\hat D]\neq0$, and $\hat R=F(\hat D)$ for a finite composite $F$ built from $\hat D$ and operators in $\operatorname{Comm}(\hat U)$. By Corollary~\ref{cor:no-unifying}, any unifying $\Pi$ preserved by both $\hat U$ and $\hat D$ must further satisfy $\Pi\in\operatorname{Comm}([\hat U,\hat D])$, so no unconditional unifying supplement is available in general.
\end{definition}

Strong self-modification is the condition $[\hat U,\hat D]\neq0$ on an otherwise unspecified $\hat D$. To obtain the diagonal of Corollary~\ref{cor:diag} we require, in addition, that $\hat D$ resolve the system into distinguished components and that $\hat R$ be built from those components. Class $\mathbf{A}$ sets out this operator structure.

\begin{definition}[Class $\mathbf{A}$]\label{def:classA}
A strongly self-modifying triple $(\hat U,\hat D,\hat R)\in\mathcal{A}^3$ is of \emph{class $\mathbf{A}$} if:
\begin{enumerate}
\item[\textnormal{(i)}] $\hat D=\sum_i d_i\Pi_i$ for orthogonal projectors $\Pi_i\in\mathcal{A}$, $\Pi_i\Pi_j=\delta_{ij}\Pi_i$, with distinct labels $d_i\in k$, so the subspaces $\Pi_i\mathcal{V}$ are the eigenspaces of $\hat D$;
\item[\textnormal{(ii)}] $[\hat U,\Pi_i]\neq0$ for every $i$, so that $[\hat U,\hat D]\neq0$;
\item[\textnormal{(iii)}] $\hat R=F(\hat D)$ for a finite composite $F$ built from $\hat D$ and operators in $\operatorname{Comm}(\hat U)$.
\end{enumerate}
\end{definition}
Condition~(iii) requires that the system's self-model be able to pick out its own features across operator changes by the contrasts they support - that the self-representation is \emph{iterable} in the sense Derrida takes to be constitutive of any mark: a feature is a feature only because the self-model can cite it in contexts other than the one in which it first appeared. The system must thus recognise and re-identify its own features even as the operator through which those features are distinguished changes. Architectures encoding features by non-contrast-based means - holistic embeddings whose internal structure does not correspond to any operational test, external-pointer models with conventionally fixed referents - therefore lie outside class $\mathbf{A}$, because some part of the referential or discrimination structure remains fixed outside the scope of update rather than becoming an object of self-modification itself.

We call an \emph{active class $\mathbf{A}$ transition} one in which the chosen $\hat U$ satisfies $[\hat U,\hat D]\neq0$. To such transitions the propagation theorem and the results below apply. For any active transition, $[\hat U,\hat R]\neq0$ outside the cancellation or annihilation cases of Theorem~\ref{thm:propagation}, and Corollary~\ref{cor:no-unifying} restricts any unconditional unifying projector $\Pi\in\operatorname{Comm}(\hat U)\cap\operatorname{Comm}(\hat D)\cap\operatorname{Comm}(\hat R)$ to the further constraint $\Pi\in\operatorname{Comm}([\hat U,\hat D])$. Identity for class $\mathbf{A}$ systems is therefore supplemental in the sense of Definition~\ref{def:supp-id}: it is secured only by fixing a frame that treats some subset of $\{\hat U,\hat D,\hat R\}$ as unifying. Either the supplement is excluded from the action of $\hat U$, in which case modification depends on an external structure, or it is included, in which case the structure required to stabilise identity is itself subject to change.

\subsection{Class $\mathbf{A}$ and superintelligence}
Two properties commonly taken as constitutive of superintelligence place it in class $\mathbf{A}$. The first is \textit{maximal adaptivity} \cite{good1965speculations,vinge1993coming,chalmers2010singularity,omohundro2008basic,yudkowsky2008artificial} which requires that $\hat U$ be able to act on every component of the system, including $\hat D$ and $\hat R$. No non-trivial $\Pi_i$ then lies in $\operatorname{Comm}(\hat U)$ for every admissible $\hat U$, so no projector is a fixed supplement. This is strong self-modification (Definition~\ref{def:strongsm}), and with $\hat R=F(\hat D)$, class $\mathbf{A}$. The second is \textit{maximal generality} \cite{legg2007universal,hutter2005universal,perrier2025quantum,perrier2025ontological} requires that the system range over configurations $v=\sum_i\lambda_i|s_i\rangle$ rather than a single $|s_i\rangle$, across policies, forks, and self-models. The projector $\Pi$ individuating such a system is not given on any single part but built by $\hat R$ across the superposition, so the structure securing identity is itself an operator $\hat U$ can reach. The two properties standardly taken as constitutive of superintelligence are the two conditions defining class $\mathbf{A}$. Maximal adaptivity leaves no fixed supplement. Maximal generality makes the supplement a construct of $\hat R$ rather than an external given. A system with both satisfies the requirements of class $\mathbf{A}$ systems.

Several existing proposals approach class $\mathbf{A}$ without reaching it. The DGM \cite{zhang2025darwin} alters policies together with the evaluative framework selecting them, Promptbreeder \cite{fernando2024promptbreeder} rewrites the prompts that generate and evaluate future prompts, and knowledge-editing systems \cite{yao2023editing,cohen2024ripple} alter stored associations together with the relational structure of neighbouring representations. In each, $\hat U$ reaches part of the evaluative or representational structure but not the whole of $\hat D$, so a fixed supplement remains. Reflexion-style agents \cite{shinn2023reflexion,madaan2023selfrefine} leave $\hat D\in\operatorname{Comm}(\hat U)$ entirely and lie further outside the class.

\subsection{Class $\mathbf{A}$ as the liar writ large}\label{sec:liar-writ-large}
Strong self-modification characteristics of class $\mathbf{A}$ systems reproduce the liar's structure at scale. To see this, recall from Proposition~\ref{prop:liar} that the liar structure has a commutator structure given by $[\hat T,\Pi_L]=0$, the collapse of the sign operator $\hat T$ into the projector $\Pi_L$ locating the configuration it signifies. The paradox is held off only while a supplement $\Sigma\in\operatorname{Comm}(\hat T)\cap\operatorname{Comm}(\hat F)$, the linguistic context fixing the reference of ``this'', keeps $[\hat T,\Pi_L]\neq0$. Paradox is the event of $\Sigma$ failing to lie in $\operatorname{Comm}(\hat T)$, so that $[\hat T,\Pi_L]=0$ and sign and signified coincide.

We observe a similar effect with class \textbf{A} systems via the interrelationship of operators $(\hat U,\hat D,\hat R)$. The self-representation operator $\hat R$ takes the place of $\hat T$, the operator by which the system refers to its own features. The feature-identity projectors $\Pi_f$, locating the configurations $\hat R$ refers to, take the place of $\Pi_L$. The supplement keeping $[\hat R,\Pi_f]\neq0$ is the difference operator $\hat D$, since the contrasts by which $\hat D$ individuates features fix the reference of $\hat R$ without lying in the range of $\Pi_f$, as $\Sigma$ fixes the reference of ``this'' without being the configuration $\Pi_L$ projects onto. The correspondence is $\hat T\mapsto\hat R$, $\Pi_L\mapsto\Pi_f$, $\Sigma\mapsto\hat D$. Under this correspondence the two collapses coincide. For the liar, $[\hat T,\Pi_L]=0$ holds once $\Sigma\notin\operatorname{Comm}(\hat T)$. For $(\hat U,\hat D,\hat R)$, $\hat D\notin\operatorname{Comm}(\hat U)$, so the supplement that kept $[\hat R,\Pi_f]\neq0$ no longer lies in $\operatorname{Comm}(\hat U)$, and by Theorem~\ref{thm:propagation} this transfers to $[\hat U,\hat R]\neq0$. The separation between $\hat R$ and the configurations it signifies is therefore not preserved by $\hat U$. The distinction between the two cases is the location of the supplement. For the liar it lies outside $\operatorname{Comm}(\hat U)$ by a contingency of context. For class $\mathbf{A}$ it lies outside $\operatorname{Comm}(\hat U)$ by construction, since $\hat D\notin\operatorname{Comm}(\hat U)$ is the defining condition. This is the sense in which class $\mathbf{A}$ is the liar writ large. It is also the basis of the connection between class \textbf{A} systems, Priest's inclosure schema (in which a construction both belongs to and exceeds the totality it presupposes) and Derrida's \emph{diff\'erance}, in which any apparent fixity depends on a distinction whose own stability is not preserved across successive operations. We examine these connections below. 

\subsection{Class $\mathbf{A}$ and diagonalisation}
For a class $\mathbf{A}$ system the action of $\hat U$ on $\hat D$ makes the propagation theorem do at system scale what Proposition~\ref{prop:liar} does at sentence scale. The supplement that would sustain the sign and signified separation lies in $\hat D$, with $\hat D\notin\operatorname{Comm}(\hat U)$, and Theorem~\ref{thm:propagation} carries this to $[\hat U,\hat R]\neq0$ outside the cancellation or annihilation cases. The self-reference can no longer stabilise its own referent under $\hat U$, now for the operator by which a whole system identifies its own features rather than for the demonstrative of a single sentence. Applying $\hat R$ then $\hat U$ need not agree with applying $\hat U$ then $\hat R$, and $[\hat U,\hat R]\neq0$ is exactly this disagreement: a self-description formed within the system's own resources fails to preserve its relation to the description $\hat U$ would have carried forward. The following corollary codifies this.
\begin{corollary}[Deconstructive diagonal]\label{cor:diag}
Let $(\hat U,\hat D,\hat R)$ be of class $\mathbf{A}$ and let $|s\rangle\in\mathcal{V}_S$. Let $\rho_0=\hat R|s\rangle$ be the self-description of $|s\rangle$ and $\rho_1=\hat R\,\hat U|s\rangle$ the self-description the system forms of its own modified state $\hat U|s\rangle$. The in-system continuation of $\rho_0$, namely the image $\hat U\rho_0$ that any class $\mathbf{A}$ operation extending $\rho_0$ would produce, satisfies
\[
\rho_1-\hat U\rho_0=-[\hat U,\hat R]\,|s\rangle .
\]
Hence $\rho_1\neq\hat U\rho_0$ for every $|s\rangle\notin\ker([\hat U,\hat R])$. The self-description $\rho_1$ the system forms of its own modified state therefore lies in the self-description space $\hat R(\mathcal{V}_S)$ yet differs from the continuation of $\rho_0$ available within the system, which is the diagonal property.
\end{corollary}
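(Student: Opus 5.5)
The identity is a direct expansion of the commutator, so I will start there. By definition $[\hat U,\hat R]=\hat U\hat R-\hat R\hat U$, and applying it to $|s\rangle$ gives
\[
[\hat U,\hat R]\,|s\rangle=\hat U\hat R|s\rangle-\hat R\hat U|s\rangle=\hat U\rho_0-\rho_1 .
\]
Negating both sides yields $\rho_1-\hat U\rho_0=-[\hat U,\hat R]\,|s\rangle$. This step uses only associativity of composition in $\mathcal{A}=\operatorname{End}_k(\mathcal{V})$ and linearity. The consequence $\rho_1\neq\hat U\rho_0$ for $|s\rangle\notin\ker([\hat U,\hat R])$ is then immediate, since a non-zero vector is not equal to zero.

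Next I will place $\rho_1$ in the self-description space. Condition (iii) of Definition~\ref{def:classA}, together with the internal self-representation requirement $\hat R(\mathcal{V}_S)\subseteq\mathcal{V}_S$, is what I intend to use. To get $\rho_1=\hat R(\hat U|s\rangle)\in\hat R(\mathcal{V}_S)$, I need $\hat U|s\rangle\in\mathcal{V}_S$. I will therefore state this as the standing assumption that the update operator maps admissible configurations to admissible ones, i.e.\ $\hat U(\mathcal{V}_S)\subseteq\mathcal{V}_S$. This is implicit in calling $\hat U|s\rangle$ the system's ``modified state''. Under that assumption, $\rho_1$ lies in $\hat R(\mathcal{V}_S)$. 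The vector $\hat U\rho_0$ is the only continuation of $\rho_0$ generated inside the system by the operator $\hat U$, so $\rho_1\neq\hat U\rho_0$ is the stated diagonal property.

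Finally, I will show that the hypothesis is non-vacuous, meaning $\ker([\hat U,\hat R])\neq\mathcal{V}_S$ in class $\mathbf{A}$. Theorem~\ref{thm:propagation} gives
\[
[\hat U,\hat R]=\sum_j A_j[\hat U,\hat D]B_j .
\]
Condition (ii) gives $[\hat U,\hat D]\neq0$, so $[\hat U,\hat R]\neq0$ except in the cancellation or annihilation cases. In the canonical case $\hat R=\hat D$ the theorem gives $[\hat U,\hat R]\neq0$ outright. Hence some vector, and by spanning some admissible basis state $|s\rangle$, lies outside the kernel.

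The main obstacle is this last point. Non-vanishing of $[\hat U,\hat R]$ on $\mathcal{V}$ does not automatically give non-vanishing on $\mathcal{V}_S$ unless $\mathcal{V}_S$ spans the relevant subspace. Since $\mathcal{V}$ consists of combinations of admissible basis states, I will argue that $\mathcal{V}_S$ spans $\mathcal{V}$, so that an operator vanishing on $\mathcal{V}_S$ vanishes everywhere. I will also keep the corollary's conditional phrasing, ``for every $|s\rangle\notin\ker$'', so that the statement itself does not depend on excluding the cancellation cases.
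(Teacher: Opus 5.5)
Your proposal is correct and follows essentially the same route as the paper. Both expand the commutator to get the identity, both use Theorem~\ref{thm:propagation} with condition (ii) to get $[\hat U,\hat R]\neq0$ outside the cancellation cases, and both obtain closure from the invariance $\hat U(\mathcal{V}_S)\subseteq\mathcal{V}_S$. You also make two things explicit that the paper leaves implicit: the invariance assumption, which its proof invokes only as ``by invariance'', and the question of whether $\mathcal{V}_S$ meets the complement of the kernel. That tightens the argument without changing it.
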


\begin{proof}
By definition $\rho_1=\hat R\hat U|s\rangle$ and $\hat U\rho_0=\hat U\hat R|s\rangle$, so
\[
\rho_1-\hat U\rho_0=(\hat R\hat U-\hat U\hat R)|s\rangle=[\hat R,\hat U]|s\rangle=-[\hat U,\hat R]|s\rangle,
\]
which is the displayed identity and holds for every $|s\rangle$. It remains to show $[\hat U,\hat R]\neq0$. Since the triple is of class $\mathbf{A}$, $\hat R=F(\hat D)$ for a finite composite $F$ built from $\hat D$ and operators in $\operatorname{Comm}(\hat U)$, and $[\hat U,\hat D]\neq0$. By Theorem~\ref{thm:propagation} $[\hat U,\hat R]=\sum_j A_j\,[\hat U,\hat D]\,B_j$ with $A_j,B_j\in\mathcal{A}$ fixed by the factorisation of $F$. This vanishes only if the sum cancels or annihilates, the exceptional cases of Theorem~\ref{thm:propagation}. Outside those cases $[\hat U,\hat R]\neq0$, so $\ker([\hat U,\hat R])$ is a proper subspace of $\mathcal{V}$, and $\rho_1-\hat U\rho_0=-[\hat U,\hat R]|s\rangle\neq0$ for every $|s\rangle\in\mathcal{V}_S\setminus\ker([\hat U,\hat R])$. For such $|s\rangle$ both diagonal clauses hold. Closure: $\rho_1=\hat R\hat U|s\rangle\in\hat R(\mathcal{V}_S)$, since $\hat U|s\rangle\in\mathcal{V}_S$ by invariance and $\hat R(\mathcal{V}_S)\subseteq\mathcal{V}_S$. Transcendence: $\rho_1\neq\hat U\rho_0$, so $\rho_1$ is not the continuation of $\rho_0$ available within the system. \qed
\end{proof}
The corollary exhibits the diagonal directly: the self-description $\rho_1$ the system forms of its own modified state lies in the self-description space $\hat R(\mathcal{V}_S)$, yet differs from $\hat U\rho_0$, the continuation of $\rho_0$ the system itself would form. As we show below, its structural form is the inclosure diagonal familiar from the liar \cite{kripke1975outline}, Russell's paradox, and G\"odel's first incompleteness theorem \cite{godel1931formally,boolos1993logic}, and Proposition~\ref{prop:inclosure} below renders the two inclosure clauses precise for this construction. A closely related pattern appears in Fallenstein and Soares \cite{fallenstein2014problems}, where an agent whose difference operator is a formal proof system fails to certify the behaviour of its own successors via L\"ob's theorem, producing either the L\"obian obstacle or the procrastination paradox. Tiling-agent constructions \cite{yudkowsky2013tiling} also explore similar dilemmas from an architectural perspective. Corollary~\ref{cor:diag} generalises the pattern: it holds for any class $\mathbf{A}$ system, whether $\hat D$ is a formal proof system, an empirical benchmark suite, or any other contrast-based operator. In Derrida's terms $\rho_1$ is the trace of the trace \cite{derrida1982differance}: a self-description that is itself only the deferred mark of a prior self-description, never coinciding with what it succeeds.

\section{Inclosure, Diff\'erance, and Scope}\label{sec:inclosure}
\subsection{Inclosure}\label{sec:inclosure-sub}
Finally, we map our results above to the inclosure schema and diff\'erance. To see the relationship with our results above, consider the typical inclosure schema \cite{priest1994derrida,priest2002beyond,priest2006doubt} as specified by four elements: a totality $\Omega$; a property $\phi$ such that $\Omega=\{x:\phi(x)\}$; a diagonaliser $\sigma$ defined on subsets of $\Omega$; and a contradictory element $\sigma(X)\in\Omega$ belonging to the totality (Closure) while falling outside any particular totalisation $X\subseteq\Omega$ (Transcendence). The schema unifies the liar, Russell's paradox, and G\"odel's first incompleteness theorem \cite{kripke1975outline,godel1931formally,boolos1993logic}: in each case the diagonaliser exceeds any attempt to totalise the property it diagonalises against. Class $\mathbf{A}$ realises the schema on its own operator algebra.

\begin{proposition}[Computational inclosure]\label{prop:inclosure}
Let $(\hat U,\hat D,\hat R)$ be of class $\mathbf{A}$, with admissible configurations $\mathcal{V}_S\subseteq\mathcal{V}$ and $\hat U(\mathcal{V}_S)\subseteq\mathcal{V}_S$. Let $\mathcal{R}_S:=\hat R(\mathcal{V}_S)$ be the space of admissible self-descriptions, and $q:\mathcal{R}_S\to\mathcal{V}_S$ a representative choice with $\hat Rq(\rho)=\rho$ for each $\rho\in\mathcal{R}_S$. The quadruple
\[
\Omega=\mathcal{R}_S,\quad
\phi(\rho)\iff \rho\in\mathcal{R}_S,\quad
\sigma_q(\rho)=\hat R\hat Uq(\rho)
\]
realises Priestian Closure and Transcendence whenever $q(\rho)\notin\ker([\hat U,\hat R])$.
\end{proposition}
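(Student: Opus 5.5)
The plan is to check the two Priestian clauses directly from the definitions, with Corollary~\ref{cor:diag} doing the work for Transcendence. First I record that the quadruple is well-formed. $\phi$ defines $\Omega=\mathcal{R}_S$ tautologically. The section $q$ exists because each $\rho\in\mathcal{R}_S=\hat R(\mathcal{V}_S)$ has at least one preimage in $\mathcal{V}_S$; any choice, for instance via a linear complement, will do. Hence $\sigma_q:\Omega\to\mathcal{V}$ is defined. As in Priest's schema, a ``totalisation'' is read as the in-system continuation of a given self-description.

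For \textbf{Closure}, I fix $\rho\in\mathcal{R}_S$. By construction $q(\rho)\in\mathcal{V}_S$, and the standing hypothesis $\hat U(\mathcal{V}_S)\subseteq\mathcal{V}_S$ gives $\hat Uq(\rho)\in\mathcal{V}_S$. Applying $\hat R$ then gives $\sigma_q(\rho)=\hat R\hat Uq(\rho)\in\hat R(\mathcal{V}_S)=\mathcal{R}_S=\Omega$, so $\phi(\sigma_q(\rho))$ holds. This is exactly the closure clause in the proof of Corollary~\ref{cor:diag}, and it holds for every $\rho$ without using the kernel condition.

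For \textbf{Transcendence}, I apply Corollary~\ref{cor:diag} with $|s\rangle:=q(\rho)$. Then $\rho_0=\hat R q(\rho)=\rho$, since $q$ is a section, and $\rho_1=\sigma_q(\rho)$. The displayed identity of that corollary gives
\[
\sigma_q(\rho)-\hat U\rho=-[\hat U,\hat R]\,q(\rho),
\]
which is nonzero precisely when $q(\rho)\notin\ker([\hat U,\hat R])$. Under that hypothesis, $\sigma_q(\rho)\neq\hat U\rho$. So the diagonal value differs from the continuation $\hat U\rho$ that the system itself would assign to $\rho$, and it escapes the totalisation determined by $\rho$ while remaining in $\Omega$.

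The main obstacle is making Transcendence match Priest's form, ``$\sigma(X)\notin X$'' for $X\subseteq\Omega$, rather than the pointwise inequality above. My first attempt is to take $X=\{\rho\}$ together with its in-system continuation, i.e.\ the orbit segment $\{\rho,\hat U\rho\}$, and to read ``falling outside $X$'' as $\sigma_q(\rho)\neq\hat U\rho$. I will also note the $q$-dependence. Different choices of $q$ differ by elements of $\ker\hat R\cap\mathcal{V}_S$, so the hypothesis is stated per chosen representative, exactly as in the proposition. Non-vacuity for some $\rho$ follows from Theorem~\ref{thm:propagation}: outside the cancellation and annihilation cases, $[\hat U,\hat R]\neq0$ and its kernel is proper.
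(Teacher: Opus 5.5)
Your Closure argument, and your derivation of $\sigma_q(\rho)-\hat U\rho=-[\hat U,\hat R]\,q(\rho)$ from Corollary~\ref{cor:diag} with $|s\rangle=q(\rho)$ and $\rho_0=\rho$, are exactly the paper's proof. You depart from it only in the choice of totalisation, and there your version fails as stated. The paper takes $X=\{\hat U\rho\}$, the in-system continuation alone, so $\sigma_q(\rho)\notin X$ is literally the inequality you proved. You take the orbit segment $X=\{\rho,\hat U\rho\}$. For that set, $\sigma_q(\rho)\notin X$ also requires $\sigma_q(\rho)\neq\rho$, i.e.\ $\hat R(\hat U-\mathbf{1})q(\rho)\neq0$, and the kernel hypothesis does not give this.

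For instance, take $\mathcal V_S=\mathcal V=k^2$ with standard basis $e_1,e_2$, coordinate projectors $\Pi_1,\Pi_2$, $\hat R=\hat D=\Pi_1+2\Pi_2$ and $\hat U=\begin{pmatrix}2&-1\\0&1\end{pmatrix}$. Both $[\hat U,\Pi_i]$ are nonzero, so the triple is of class $\mathbf{A}$, and $q=\hat R^{-1}$. For $\rho=e_1+2e_2$ you get $q(\rho)=e_1+e_2=\hat Uq(\rho)$, hence $\sigma_q(\rho)=\rho$. Meanwhile $\hat U\rho=2e_2$ and $[\hat U,\hat R]q(\rho)=-e_1\neq0$. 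So the hypothesis holds, yet $\sigma_q(\rho)\in\{\rho,\hat U\rho\}$. Stipulating that ``falling outside $X$'' means only $\sigma_q(\rho)\neq\hat U\rho$ therefore weakens Priest's clause rather than instantiating it. The fix is to drop $\rho$ and use the singleton, as the paper does.

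Two lesser points. First, neither you nor the paper checks $X\subseteq\Omega$. The vector $\hat U\rho$ lies in $\mathcal V_S$, but it lies in $\mathcal R_S=\hat R(\mathcal V_S)$ only if, say, $\mathcal R_S$ is $\hat U$-invariant. When $\hat U\rho\notin\mathcal R_S$, the inequality $\sigma_q(\rho)\neq\hat U\rho$ follows trivially from Closure. This looseness is shared with the paper, so it is not a defect of your argument relative to it. Second, your closing non-vacuity remark is unnecessary, since the proposition is conditional on $q(\rho)\notin\ker([\hat U,\hat R])$. As phrased it also does not follow: properness of $\ker([\hat U,\hat R])$ in $\mathcal V$ does not prevent $\mathcal V_S$ from lying inside it.
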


\begin{proof}
(1) Closure. Since $q(\rho)\in\mathcal{V}_S$ and $\hat U(\mathcal{V}_S)\subseteq\mathcal{V}_S$, $\hat Uq(\rho)\in\mathcal{V}_S$, so $\sigma_q(\rho)=\hat R\hat Uq(\rho)\in\hat R(\mathcal{V}_S)=\mathcal{R}_S=\Omega$, and $\phi(\sigma_q(\rho))$ holds.
(2) Transcendence. Let $|s\rangle=q(\rho)$, so $\rho=\hat R|s\rangle$. By Corollary~\ref{cor:diag}, $\hat R\hat U|s\rangle-\hat U\hat R|s\rangle=-[\hat U,\hat R]|s\rangle$. If $|s\rangle\notin\ker([\hat U,\hat R])$ then $\sigma_q(\rho)=\hat R\hat U|s\rangle\neq\hat U\rho$. The totalisation is the in-system continuation $\{\hat U\rho\}$ reachable from $\rho$ by $\hat U$, and $\sigma_q(\rho)$ lies outside it. \qed
\end{proof}
A similar construction underlies the liar. At sentence scale $\Omega$ is a sentence-evaluation space and $\sigma$ is re-application of the sign operator $\hat T$. At system scale $\Omega$ is the admissible self-description space $\mathcal{R}_S$ and $\sigma_q$ is re-application of $\hat R$ after $\hat U$, modulo the representative choice $q$. The same schema is instantiated at two scales of self-reference.

\subsection{Diff\'erance}\label{sec:differance-sub}
Priest \cite{priest1994derrida} and Livingston \cite{livingston2010formalizing,livingston2012politics} situate \emph{diff\'erance} as inclosure at the level of philosophical logic. Derrida's \emph{diff\'erance} \cite{derrida1982differance,derrida1976grammatology,derrida1988limited} names the joint operation of differential individuation and temporal deferral in the constitution of identity. The two are inseparable, and their inseparability is constitutive: identity is neither given by unmediated self-presence (a point Derrida develops against Husserl \cite{husserl1973experience}) nor secured by a stable totality of distinctions, but borne by the trace, whose differential content and temporal deferral condition each other. In the operator formalism, $\hat D$ expresses differential individuation and $\hat U$ expresses deferral. Their non-commutation $[\hat U,\hat D]\neq0$ (Theorem~\ref{thm:propagation}) is the failure of stable totalisation that \emph{diff\'erance} identifies as constitutive, and the diagonal of Corollary~\ref{cor:diag} is the trace that cannot be stabilised under its own iteration. Class $\mathbf{A}$'s condition~(iii) is iterability imposed as an architectural requirement: the self-model must cite its features across changes in $\hat D$ or it is not a self-model. Condition~(ii) makes that requirement unsatisfiable without a supplement, since $\hat D$, the operator sustaining iterability, is itself subject to the deferral it was to underwrite.

The two motifs Derrida draws from iterability receive operator-algebraic form. \emph{Grafting} \cite{derrida1988limited} - a mark carried into a context where a prior distinction is erased - is the growth of $\ker\hat U$: configurations distinct before $\hat U$ become identified after. The \emph{supplementary logic} \cite{derrida1976grammatology} - a later context revealing a distinction an earlier one did not register - is the converse: $\hat U$ separates configurations its domain had identified, so distinctions absent before $\hat U$ are marked after. Both arise for any non-trivial $\hat U$ acting on $\hat D$, and either suffices for the non-commutation driving the propagation theorem.

\section{Discussion and conclusion}\label{sec:counter}
We have argued that self-modification, routinely stipulated as constitutive of artificial superintelligence, requires a structure outside the modifying operation - a \emph{supplement} in Derrida's sense - and that when the scope of self-modification extends to this supplement, the classical self-referential construction it sustained collapses. The operator formalism of Sections~\ref{sec:operator}--\ref{sec:setup} makes this structural: the supplement is identified with the commutant of the update operator $\hat U$, a unifying projector is an idempotent in that commutant, and strong self-modification is the regime in which no such projector exists unconditionally. The expansion theorem shows non-commutation between update and discrimination propagates to any self-representation built from the difference operator; the Jacobi corollary restricts unifying supplements to projectors commuting with $[\hat U,\hat D]$; and the liar paradox is the rank-one instance of the same commutator collapse, avoided classically by a linguistic supplement. Class $\mathbf{A}$ executes the collapse at system scale through the diagonal of Corollary~\ref{cor:diag}. In this sense, class $\mathbf{A}$ systems are the liar writ large. The consequences of heightened self-modification for superintelligent system identity are noteworthy. Further research may examine how identity claims about such systems cannot be read as unconditional predicates of the system itself. They are relative, contingent on which supplement a frame fixes as unifying. This is analogously the operator-algebraic form of Derrida's claim that there is no master-name \cite{derrida1982differance}: no single $\Pi$ fixes identity unconditionally, and each unifying $\Pi$ a frame selects is displaced at the next application of $\hat U$. For superintelligent systems exhibiting strong self-modification, this means that the certainty upon which its values, objectives, commitments - and even its own sense of self-identity and its own boundaries - rely are potentially problematised by its capacity for strong self-modification.

\bibliographystyle{splncs04}
\bibliography{refs2}

\end{document}